\renewcommand\footnotetextcopyrightpermission[1]{}
\theoremstyle{plain}
\newtheorem{theorem}{Theorem}[section]
\newtheorem{lemma}[theorem]{Lemma}
\theoremstyle{definition}
\theoremstyle{remark}
\begin{document}

\title[GDNSQ: Low-bit Quantization]{GDNSQ: Gradual Differentiable Noise Scale Quantization for Low-bit Neural Networks}

\author{Sergey Salishev}
\authornote{Equal contribution.}
\affiliation{
  \institution{aifoundry.org}
  \city{San Francisco}
  \state{CA}
  \country{USA}
}
\email{salishev[at]aifoundry.org}

\author{Ian Akhremchik}
\authornotemark[1]
\affiliation{
  \institution{Ainekko Co.}
  \city{San Francisco}
  \state{CA}
  \country{USA}
}
\email{jan[at]nekko.ai}


\keywords{Machine Learning, Quantization, Noisy Channels, Neural Networks}

\begin{abstract}
Quantized neural networks can be viewed as a chain of noisy channels, where rounding in each layer reduces capacity as bit-width shrinks; the floating-point (FP) checkpoint sets the maximum input rate. We track capacity dynamics as the average bit-width decreases and identify resulting quantization bottlenecks by casting fine-tuning as a smooth, constrained optimization problem. Our approach employs a fully differentiable Straight-Through Estimator (STE) with learnable bit-width, noise scale and clamp bounds, and enforces a target bit-width via an exterior-point penalty; mild metric smoothing (via distillation) stabilizes training. Despite its simplicity, the method attains competitive accuracy down to the extreme W1A1 setting while retaining the efficiency of STE.
\end{abstract}

\maketitle

\section{Introduction}
Quantization of neural networks (NNs) is a key problem across many applications. 
For small models, it is feasible to implement them as electronic circuits or to map them to low-bit, dedicated hardware (e.g., neural accelerators or TPUs). 
For larger models, quantization compresses weights and accelerates inference, reducing resource usage. 
These practical needs have spurred a surge of work on the topic.

Mathematically, quantization leads to a mixed-integer programming problem. 
Direct search is infeasible at scale, so two dominant practical approaches are used: 
(i) \emph{post-training quantization} (PTQ), which rounds the weights of a floating-point (FP) model; and 
(ii) \emph{quantization-aware training} (QAT), which fine-tunes (or trains) a quantized model and typically achieves higher accuracy at the cost of training time. 
Most QAT methods fall into two families. 
The first, the \emph{straight-through estimator} (STE), was popularized by Bengio et al.~\cite{bengio2013estimating} (originally attributed to Hinton), and provides a surrogate gradient for discontinuous quantizers. 
The second replaces hard steps by smooth surrogates whose steepness is gradually increased; this idea traces back at least to backpropagation~\cite{rumelhart1986learning}. 
A third line explicitly tackles the mixed-integer problem (e.g., SLB~\cite{yang2020searching}) but is less common due to complexity.

Despite the perception that STE degrades at very low bit-widths (e.g., W1A1, W2A2), we argue that its potential is not fully exploited. 
We introduce a simple but effective variant that smooths the STE while retaining its efficiency.

\paragraph{Coding-theoretic view.}
Our approach starts from the observation that a softmax classifier can be interpreted as a (noisy) maximum-likelihood (ML) decoder over a code in a latent ``code space'': each layer acts as a noisy channel whose rounding acts like channel noise; the trained weights realize a forward-error-correcting (FEC) code. This view is similar to Shannon's noisy channel coding metaphor~\cite{shannon1949}. 
Classical links between classification and error-correcting codes go back to Error-Correcting Output Codes (ECOC)~\cite{DietterichBakiri1995ECOC,AllweinSchapireSinger2000}, where class labels are embedded as codewords and prediction corresponds to decoding. 
If the network is robust to input noise (and hence the FP checkpoint corresponds to well-separated codewords), then under a smooth decision metric the effective codebook should evolve smoothly as rounding noise increases (i.e., as channel capacity decreases), making it amenable to optimization by SGD.

\paragraph{Making the metric smooth.}
We employ knowledge distillation~\cite{hinton2015distilling} from the FP teacher using the Jeffreys (\emph{symmetrized} Kullback--Leibler) divergence to smooth the decision metric during quantization; distillation during QAT using cross-entropy or KL is standard practice~\cite{shin2020knowledge,zhang2021learning}. We specifically adopt Jeffreys divergence because, under Jeffreys-loss softmax decoder and a binary asymmetric channel (BAC) noise model, the resulting decision rule reduces to minimizing Hamming distance (see Appendix).

\paragraph{Controlling noise smoothly.}
We gradually increase effective quantization noise via a differentiable STE-style scaling  that combines LSQ~\cite{esser2019learned} with PACT~\cite{choi2018pact}, yielding a smooth path in effective bit-width. The details explaining the STE derivation from the minibatch SGD assumptions, and possible STE modifications are discussed in the Appendix.

\paragraph{Stochastic-approximation view of STE dithering.}
Among STE variants, we adopt DoReFa-style gradient dithering \cite{zhou2016dorefa}, which aligns with the stochastic-approximation viewpoint: the backward of the rounding nonlinearity is replaced by Bernoulli “probing” noise, directly analogous to SPSA \cite{spall1992spsa,spall2003stochsearch}. Under standard regularity conditions, such probes yield asymptotically unbiased descent directions with robustness to noise; variance can be reduced via iterate averaging \cite{polyak1992averaging,granichin2003randomized}, and, in our case, mini-batch SGD. Thus, our Bernoulli STE acts as an SPSA-style randomized gradient surrogate.

\paragraph{Constrained optimization.}
These ingredients lead to a smooth constrained optimization problem where the target bit-width enters as an inequality constraint. 
We solve it via an exterior-point (penalty) method by gradually increasing the penalty weight~\cite{FiaccoMcCormick1990}; details appear in the Appendix.

\paragraph{Results and ablations.}
With this simple recipe, we obtain competitive quantization on ResNet-20~\cite{he2016deep} (CIFAR-10/100) and ResNet-18 (ImageNet) at W1A1, W2A2, W3A3, and near-lossless W4A4, and we also demonstrate utility for architecture exploration. 
Ablations quantify the contribution of each component.

\paragraph{Contributions.}
\begin{enumerate}
  \item A simple, effective QAT algorithm that combines LSQ, PACT, and an exterior-point constrained optimizer to realize a smooth bit-width schedule.
  \item A theoretical explanation for why this and related STE-based methods work, linking the STE assumptions with quantization noise properties (see Appendix).
  \item A theoretical explanation of distillation, KL divergence, and Jeffreys divergence in terms of noisy-channel softmax decoding (see Appendix). 
\end{enumerate}

The source code is available at GitHub.\footnote{\url{https://github.com/aifoundry-org/MHAQ/tree/GDNSQ}}

\section{GDNSQ Method}
There are multiple variants of quantization setups aimed at different goals. In this paper, our goal is to convert matrix multiplications in the inner layers to low-bit. Thus, offset quantization and first/last-layer quantization are intentionally left out of scope.

The proposed algorithm has the following stages:
\begin{enumerate}
    \item Layer replacement with quantized counterparts;
    \item PTQ to 10-bit with a min--max policy;
    \item Gradual bit-width convergence;
    \item Final LR annealing.
\end{enumerate}

\subsection{Layer replacement with quantized counterparts}
Each convolution layer with activation function \(a\),
\begin{equation}
y=a(Wx),
\end{equation}
is replaced by its quantized counterpart
\begin{equation}
y=a\big(D_w(Q_w(W))\,D_a(Q_a(x))\big),
\end{equation}
where \(Q_w,Q_a\) are the quantization functions for weights and activations, respectively (mapping reals to integers), and \(D_w,D_a\) are the corresponding dequantization functions that invert quantization up to the introduced quantization noise.

The function 
\begin{equation}
Q(x):= q(x) + r\big(q(x)\big)
\end{equation}
is the \textit{quantization function} with \textit{quantization noise}
\begin{equation}
r(x) := \lfloor x \rceil - x = \lfloor x + \tfrac12\rfloor - x,  
\end{equation}
where \(\lfloor \cdot \rceil\) denotes rounding to the nearest integer. Hence,
\begin{equation}
Q(x) =  \lfloor q(x) \rceil.
\end{equation}
Let
\begin{equation}
\bar{x}:=\operatorname{clamp}(x,l,u)=\max\,\bigl(l,\,\min(u,x)\bigr).
\end{equation}
Then the function
\begin{equation}
q(x):=D^{-1}(\bar{x}),
\end{equation}
determines the \textit{quantization level}. The function \(D^{-1}\) is the inverse of the \textit{dequantization function} 
\begin{equation}
D(x):=s x + z,
\end{equation}
where \(s\) is the quantization scale parameter
\begin{equation}
s := \frac{u-l}{2^{\omega}-1},
\end{equation}
\(\omega\) is the current quantization bit-width, and \(z\) is the offset. Thus,
\begin{equation}
D^{-1}(\bar{x}) = (\bar{x} - z)s^{-1}.
\end{equation}
Composing \(Q\) with \(D\) gives an explicit form for the reconstructed real value:
\begin{equation}
D(Q(x)) = s s^{-1} (\bar{x} - z) + s\, r\big(q(x)\big) + z = \bar{x} + s\, r\big(q(x)\big).
\end{equation}

The product \(D_w(Q_w(W))\,D_a(Q_a(x))\) is fused before inference to extract the integer matrix--vector multiplication \(Q_w(W)\,Q_a(x)\).

The function \(\bar{x}\) is differentiable almost everywhere. The remaining step in differentiating the quantizer is to define the derivatives of \(s\,r(q(x))\) under STE assumptions:
\begin{equation}
\frac{\partial (s r)}{\partial x} := 0,
\end{equation}
\begin{equation}
\frac{\partial (s r)}{\partial s} := \operatorname{Bernoulli}\bigl(\tfrac12\bigr)-\tfrac12.
\end{equation}
Only for the function \(r\) is the standard backward step overridden; all others use default autograd backward implementations. We adopt a DoReFa-style Bernoulli-noise gradient replacement \cite{zhou2016dorefa} for its computational efficiency and its alignment with SPSA \cite{spall1992spsa}, which yields unbiased gradient estimates and robustness guarantees.

The above parameterization combines LSQ and PACT, which makes the quantization bit-width \(\omega\) a differentiable parameter. As a result, \(\omega\) can be set to an arbitrary positive real value
\begin{equation}
\omega=\log_2(u-l+s) - \log_2 s=\log_2 \Bigl(\frac{u-l}{s}+1\Bigr),
\end{equation}
providing the desired smoothness in \(\omega\). Similar to the Shannon--Hartley theorem~\cite{shannon1949}, \(\omega\) can be viewed as a channel rate at quantization noise scale \(s\).

\subsection{PTQ}
To speed up convergence, QAT is applied to a pre-quantized model. We use a simple PTQ algorithm that maps values to the min--max range over the training dataset with 10-bit quantization. This yields negligible quality loss while instantly reducing the bit-width from 32 (FP32) to 10, shaving a few QAT epochs.

\subsection{Gradual bit-width convergence}
This step is performed using the standard RAdam~\cite{liu2019variance} optimizer with default parameters. RAdam is used instead of Adam to prevent instability during BatchNorm statistics accumulation. The following loss implements exterior-point constrained optimization:
\begin{equation}\label{eq:loss}
L(x,x^*)=t_q\, c_r\, P(\omega_w,\omega_a, \omega_w^*, \omega_a^*)+ t_r\, d(x,x^*),
\end{equation}
where \(t_q, t_r\) are temperatures, \(c_r\) is a running auto-scaling coefficient, \(\omega_w,\omega_a\) are the current bit-width estimate tensors for weights and activations, \(\omega_w^*, \omega_a^*\) are the target bit-widths, \(P\) is a potential function, \(d\) is a distillation distance, and \(x,x^*\) are softmax outputs of the quantized (student) and reference FP (teacher) models. For target bit-width quantization we use the temperature-growth policy
\begin{equation}\label{eq:temperature}
\bigl(t_q(n), t_r(n)\bigr) = \bigl(\lambda_n n,\, 1\bigr),
\end{equation}
where \(n\) is the current batch index in training and \(\lambda_n\) is the learning rate at batch \(n\).

For distillation, we use the Jeffreys (symmetrized Kullback--Leibler) divergence
\begin{equation}
D_{\mathrm{KL}}(P\Vert Q)
:= \sum_{x\in\mathcal{X}} P(x)\,\log\frac{P(x)}{Q(x)},
\end{equation}
\begin{equation}
d(x,y):=J(y,x)=D_{\mathrm{KL}}(x\Vert y)+D_{\mathrm{KL}}(y\Vert x).
\end{equation}

The potential function is the distance from the exterior point to the constraint surface:
\begin{equation}
P = \mathbb{E}\big(\max(0, \omega_w - \omega_w^*) + \max(0, \omega_a - \omega_a^*)\big).
\end{equation}

To simplify tuning, we implement auto-scaling of \(P\) to the running average of the distillation distance:
\begin{equation}
c_r(n) = \frac{1}{n}\sum_{i=0}^{n-1} d(x_i, x^*_i).
\end{equation}

\subsection{Final LR annealing}
To increase convergence speed, the bit-width convergence step uses a relatively high constant LR, which can leave NN parameters suboptimal by the time the target bit-width is reached. For final tuning, we perform exponential LR annealing with \(\alpha=0.9985\):
\begin{equation}
\lambda_{n+1} = \alpha \lambda_n.
\end{equation}

\section{Related works}
\textbf{Uniform STE based}: LSQ~\cite{esser2019learned} proposes a differentiable noise scale. PACT~\cite{choi2018pact} proposes  differentiable clamp bounds. PACT can be considered as an architecture change, inserting additional ReLU  units and bias to implement clamps. 
DoReFa~\cite{zhou2016dorefa} uses a random gradient in the backward pass.

\textbf{Non-uniform STE based}: nuLSQ~\cite{10.1007/978-981-96-0966-6_4} is a non-uniform variant of LSQ~\cite{esser2019learned}. LCQ~\cite{yamamoto2021learnablecompandingquantizationaccurate} is another non-uniform LSQ variant with a learnable parametric value commanding curve. IOS-POT~\cite{HE20243021}, APoT~\cite{li2019fully} are Power-of-Two non-uniform methods.

\textbf{Uniform smooth regularization}: DSQ~\cite{gong2019differentiablesoftquantizationbridging}.

\textbf{Uniform other gradient based}: RBNN~\cite{lin2020rotatedbinaryneuralnetwork}, SiMaN~\cite{Lin2023SiMaN} cosine distance-based Binary Network specific optimizations.

\textbf{Uniform search based}: SLB~\cite{yang2020searching} proposes a hybrid training/search-based algorithm for quantized weight values selection. For activation quantization, DoReFa~\cite{zhou2016dorefa} is used.

\textbf{Quantization Distillation}: The usage of the soft labels generated by the FP version of the same model instead of hard labels during QAT training is proposed by~\cite{shin2020knowledge, zhang2021learning} as QAT regularization to prevent oscillations during training. BWFR~\cite{yu2024improvingquantizationawaretraininglowprecision} in addition to distillation from the FP model, also replaces parts of the quantized model with FP counterparts, training them to work interchangeably.

\textbf{Quantization-friendly architectures}: On ImageNet, the accuracy of binarized ResNet-18 improved markedly with the introduction of Bi-Real~\cite{Liu2018BiRealNet}, which adds more skip connections within each basic block to preserve real-valued activations and increase information flow at low bit-widths. RBNN~\cite{lin2020rotatedbinaryneuralnetwork} and SiMaN~\cite{Lin2023SiMaN} explicitly consider Bi-Real–style double-skip connections and report consistent gains over the normal (vanilla) ResNet structure.

\textbf{Architecture adaptation}: NCE~\cite{park2023automaticnetworkadaptationultralow} proposes a channel splitting technique and applies its quantization bottlenecks during training.

\section{Experiments}
\subsection{Setup}
For our experiments, we decided to use the most common neural network architectures and the corresponding datasets that allow for direct comparison across papers, meanwhile maintaining a reasonable amount of computational complexity. For the models, ResNet18 \cite{he2016deep} and ResNet20 \cite{he2016deep} were chosen. And for the datasets, CIFAR-10, CIFAR-100 \cite{Krizhevsky2009LearningML} and ILSVRC2012 \cite{ILSVRC15} were chosen. The important note is that for ResNet20 there are two implementations that slightly differ from each other and allow for different results from quantization algorithms. One of them follows the original paper completely and is re-implemented by Yerlan Idelbayev \cite{Idelbayev18a} The other one, often referred to as "preactivated" ResNet, is the modified version, which achieves slightly better metrics out-of-the-box by adding a convolution shortcut with a 1x1 kernel, which is not in the original ResNet20 architecture \cite{he2016identitymappingsdeepresidual}. For the sake of fair comparison, we will be comparing them separately in order to understand the performance of the quantization method itself.

\subsection{Experiments on CIFAR-10}

In this section we provide a comparison for two slightly different models on the CIFAR-10 dataset. The proposed GDNSQ method could be used for an arbitrary amount of bits for weights and activations and can achieve a quantization configuration as low as W1A1. For the comparison, a couple of different models that achieve state-of-the-art results in a particular bit-width were chosen. 

We compare with the published results of the following methods: \textbf{PACT}~\cite{choi2018pact}, \textbf{DSQ} \cite{gong2019differentiablesoftquantizationbridging}, \textbf{SLB} ~\cite{yang2020searching}, \textbf{RBNN}~\cite{lin2020rotatedbinaryneuralnetwork}, \textbf{IOS-POT}~\cite{HE20243021}, \textbf{APoT}~\cite{DBLP:journals/corr/abs-1909-13144}.

As shown in Table~\ref{tbl:cifar10_1}, GDNSQ outperforms all counterparts in every configuration except W1A1, where it loses only to the binarization-specific \textbf{RBNN}.

\begin{table}[tbp]
\begin{tabular}{c|cc|c}
\hline
\multirow{2}{*}{\textbf{Method}} & \multicolumn{2}{c|}{\textbf{Configuration}} & \textbf{Accuracy Top-1 (\%)} \\ \cline{2-4} 
                                 & \textbf{W}   & \textbf{A}   &                              \\ \hline
FP                               & 32                 & 32                     & 92.62                        \\ \hline
DSQ                              & 1                  & 1                      & 84.10                         \\
SLB                              & 1                  & 1                      & 85.50                         \\
RBNN                             & 1                  & 1                      & \textbf{86.50}                \\ 
GDNSQ (Ours)                     & 1                  & 1                      & 85.30 $\pm$ 0.4               \\ \hline
DSQ                              & 1                  & 32                     & 90.20                         \\
SLB                              & 1                  & 32                     & 90.60                         \\
GDNSQ (Ours)                     & 1                  & 32                     & \textbf{92.01 $\pm$ 0.1}               \\ \hline
PACT                             & 2                  & 2                      & 88.9                          \\
SLB                              & 2                  & 2                      & 90.60                         \\
APoT                             & 2                  & 2                      & 91.00                         \\
GDNSQ (Ours)                     & 2                  & 2                      & \textbf{91.36 $\pm$ 0.1}                \\ \hline
PACT                             & 3                  & 3                      & 90.6                         \\
APoT                             & 3                  & 3                      & 92.20                        \\
IOS-POT                          & 3                  & 3                      & 92.24                        \\
GDNSQ (Ours)                     & 3                  & 3                      & \textbf{92.42 $\pm$ 0.04}               \\ \hline
PACT                             & 3                  & 3                      & 90.8                         \\
SLB                              & 4                  & 4                      & 91.60                         \\
APoT                             & 4                  & 4                      & 92.30                         \\
IOS-POT                          & 4                  & 4                      & 92.61                         \\
GDNSQ (Ours)                     & 4                  & 4                      & \textbf{92.64 $\pm$ 0.09}
\end{tabular}
\caption{Methods on Cifar-10 dataset using ResNet20 model architecture.}
\label{tbl:cifar10_1}
\end{table}

For the pre-activated alternative of the ResNet-20 model architecture, we used a different set of methods to compare against: \textbf{LCQ} \cite{DBLP:journals/corr/abs-2103-07156}, \textbf{nuLSQ} \cite{10.1007/978-981-96-0966-6_4}, \textbf{BWRF} \cite{yu2024improvingquantizationawaretraininglowprecision}. It is illustrated in Table \ref{tbl:cifar10_2} that the proposed method outperforms all its competitors by a margin in all the quantization configurations tested. Experiments show that GDNSQ could be effectively used to convert the FP model to a 4-bit quantized counterpart, achieving lossless quantization without any modifications to the original model architecture.

\begin{table}[ht]
\begin{tabular}{c|cc|c}
\hline
\multirow{2}{*}{\textbf{Method}} & \multicolumn{2}{c|}{\textbf{Configuration}} & \textbf{Accuracy Top-1 (\%)} \\ \cline{2-4} 
                                 & \textbf{W}           & \textbf{A}           &                              \\ \hline
FP                               & 32                   & 32                   & 93.90                         \\ \hline
GDNSQ (Ours)                     & 1                    & 1                    & \textbf{87.08 $\pm$ 0.15}    \\ \hline
BWRF                             & 2                    & 2                    & 90.98                        \\
nuLSQ                            & 2                    & 2                    & 91.30                        \\
LCQ                              & 2                    & 2                    & 91.80                         \\
GDNSQ (Ours)                     & 2                    & 2                    & \textbf{92.30 $\pm$ 0.1}               \\ \hline
nuLSQ                            & 3                    & 3                    & 92.66                        \\
BWRF                             & 3                    & 3                    & 92.76                        \\
LCQ                              & 3                    & 3                    & 92.80                         \\
GDNSQ (Ours)                     & 3                    & 3                    & \textbf{93.53 $\pm$ 0.03}               \\ \hline
BWRF                             & 4                    & 4                    & 93.13                        \\
nuLSQ (Ours)                     & 4                    & 4                    & 93.16                        \\
LCQ                              & 4                    & 4                    & 93.20                         \\
GDNSQ (Ours)                     & 4                    & 4                    & \textbf{93.90 $\pm$ 0.02}              
\end{tabular}
\caption{Methods on Cifar-10 dataset using pre-activated ResNet20 model architecture}
\label{tbl:cifar10_2}
\end{table}

\subsection{Experiments on CIFAR-100}

Train ResNet20 on CIFAR-100 dataset is a less common practice in modern papers; henceforth, it limits the amount of papers to compare to. However, evaluating identical methods on the same network, with only the dataset being altered, enables the examination of the training data's effect on the quantization process. For the CIFAR-100 dataset, the main competitor \textbf{nuLSQ} method was chosen because it is quite recent and performs well. 

Table \ref{tbl:cifar100} illustrates conducted experiments and shows that nuLSQ outperforms the proposed method at a2w2 configuration, while giving the lead in the rest of the configurations. The possible reason for that probably lies in the non-uniform approach of the method.

\begin{table}[tbp]
\begin{tabular}{l|cc|c}
\hline
\multirow{2}{*}{\textbf{Method}} & \multicolumn{2}{c|}{\textbf{Configuration}} & \textbf{Accuracy Top-1 (\%)} \\ \cline{2-4} 
                                 & \textbf{W}           & \textbf{A}           &                              \\ \hline
FP                               & 32                   & 32                   & 70.31                        \\ \hline
\multirow{3}{*}{nuLSQ}           & 2                    & 2                    & \textbf{66.02}               \\
                                 & 3                    & 3                    & 68.58                        \\
                                 & 4                    & 4                    & 69.42                        \\ \hline
\multirow{3}{*}{GDNSQ (Ours)}    & 1                    & 1                    & \textbf{57.74 $\pm$ 0.25}    \\
                                 & 2                    & 2                    & 65.98 $\pm$ 0.3              \\
                                 & 3                    & 3                    & \textbf{69.09 $\pm$ 0.2}     \\
                                 & 4                    & 4                    & \textbf{70.24 $\pm$ 0.03}    \\ \hline
\end{tabular}
\caption{nuLSQ and GDNSQ methods on Cifar-100 dataset using pre-activated ResNet20 architecture}
\label{tbl:cifar100}
\end{table}

\subsection{Experiments on ILSVRC~2012}
ResNet-18 on ImageNet-1k is a popular benchmark for evaluating quantization and binarization methods. In surveying recent SOTA results, we found that some papers report “ResNet-18” while actually using non-vanilla backbones (e.g., Bi-Real with double-skip shortcuts). For example, ReSTE briefly states in the text that it uses the Bi-Real architecture for ImageNet at W1A1 ``for fair comparison,'' and the model in the source code called ``resnet18.py'' is not the vanilla ResNet-18~\cite{wu2023estimator}. For SLB~\cite{yang2020searching}, no official code is available, so the exact backbone underlying their “ResNet-18” ImageNet results cannot be verified.

\emph{Note.} The public SLB repository configures its ImageNet experiments with a Bi-Real backbone (model name \texttt{bireal\_resnet18}) rather than the vanilla ResNet-18.\footnote{\url{https://github.com/zhaohui-yang/Binary-Neural-Networks/blob/main/SLB/README.md}}

By contrast, ReActNet clearly documents its architectural modifications relative to standard backbones~\cite{Liu2020ReActNet}. Moreover, RBNN and SiMaN analyze the impact of Bi-Real–style double-skip connections and report that such backbones outperform the vanilla ResNet in the binary setting~\cite{lin2020rotatedbinaryneuralnetwork,Lin2023SiMaN}. 

To avoid ambiguity, we restrict our main comparisons to methods with public code where the backbone is verifiably vanilla; non-compliant results (no code and/or architecture modifications) are listed in Table~\ref{tbl:imgnet-bad}.

\begin{table}[tbp]
\begin{tabular}{c|cc|c}
\hline
\multirow{2}{*}{\textbf{Method}} & \multicolumn{2}{c|}{\textbf{Configuration}} & \textbf{Accuracy Top-1 (\%)} \\ \cline{2-4} 
                                 & \textbf{W}   & \textbf{A}   &                              \\ \hline
FP                               & 32                 & 32                     & 71.93                        \\ \hline
ReSTE                            & 1                  & 1                      & 60.88                \\  
ReActNet                         & 1                  & 1                      & 65.9                        \\  
SiMaN*                           & 1                  & 1                      & 66.1                \\  
SLB                              & 1                  & 32                     & 67.10                \\ 
SLB                              & 2                  & 2                      & 66.10                \\ 
LCQ                              & 2                  & 2                      & 68.90                \\ 
LCQ                              & 4                  & 4                      & 71.50               \\ 
\end{tabular}
\caption{Performance comparison on ResNet18 with ImageNet dataset, methods without source code or using modified architecture}
\label{tbl:imgnet-bad}
\end{table}
* SiMaN accuracy of $66.1$ is specifically stated for Bi-Real architecture modification, for vanilla ResNet18 result is $60.1$ (see Table~\ref{tbl:imgnet}).

The only methods for which both reported accuracy and publicly available source code exist are \textbf{RBNN}, \textbf{SiMaN}, \textbf{DSQ}, \textbf{DoReFa}, and \textbf{PACT}. Because PACT supersedes DoReFa, we omit DoReFa from the comparison.
GDNSQ is competitive across all settings except W1A1, where its accuracy is lower. Nevertheless, it is the general-purpose method and approaches the performance of binarization-specific \textbf{RBNN} and \textbf{SiMaN}. See Table~\ref{tbl:imgnet} for the full comparison.

\begin{table}[tbp]
\begin{tabular}{c|cc|c}
\hline
\multirow{2}{*}{\textbf{Method}} & \multicolumn{2}{c|}{\textbf{Configuration}} & \textbf{Accuracy Top-1 (\%)} \\ \cline{2-4} 
                                 & \textbf{W}   & \textbf{A}   &                              \\ \hline
FP                               & 32                 & 32                     & 71.93                        \\ \hline
RBNN                             & 1                  & 1                       & 59.9                        \\ 
SiMaN                            & 1                  & 1                      & \textbf{60.1}                \\
GDNSQ (Ours)                     & 1                  & 1                      & 58.6                        \\ \hline
DSQ                              & 1                  & 32                     & 63.67                        \\
GDNSQ (Ours)                     & 1                  & 32                     & \textbf{66.28}                        \\ \hline
DSQ                              & 2                  & 2                      & 65.17                        \\
PACT                             & 2                  & 2                      & 64.4                        \\
GDNSQ (Ours)                     & 2                  & 2                      & \textbf{65.94}               \\ \hline
DSQ                              & 4                  & 4                      & \textbf{69.56}               \\ 
PACT                             & 4                  & 4                      & 69.2               \\ 
GDNSQ (Ours)                     & 4                  & 4                      & 69.50            \\ \hline
\end{tabular}
\caption{Performance comparison on ResNet18 with ImageNet dataset}
\label{tbl:imgnet}
\end{table}

\section{Ablation study}
The ablation results for most of the sections are gathered in the table~\ref{tbl:ablation} using ResNet20 CIFAR10 W1A1 quantization as a probe.

\begin{table}[tbp]
\centering
\begin{tabular}{c|c}
\hline
Method modification & \textbf{Best Top-1} \\
\hline
Baseline & 85.30 $\pm$ 0.4 \\
No PTQ & 85.14  \\
No bit-width scaling & 44.55 \\
No distillation & - \\
CE distillation loss & 83.66 \\
FP Top-1 91.67 & 84.14 \\
Batchnorm freeze & 83.38  \\
\hline
\end{tabular}
\caption{ResNet20 CIFAR10 W1A1 ablation results}
\label{tbl:ablation}
\end{table}

\subsection{Validating convergence to target bit-width}
When working with smooth regularization of quantization, the common issue is a false detection of the convergence to the target bit-width, leading to inflated metric values. We are accurately calculating the actual NN bit-width by counting unique values in weights and activations during validation, as shown in the figures~\ref{fig:activations_converge},~\ref{fig:weights_converge}. The ''estimated'' value in the loss function closely approximates the ''mean'' of the actual bit-width over the NN. The ''max'' actual bit-width can only take integer values, so it has a staircase shape and slightly lags behind the average bit-width. As we can see, when the average bit-width converges, the maximum also converges to the desired value.
\begin{figure}[htbp]
    \centering
    \includegraphics[width=0.9\linewidth]{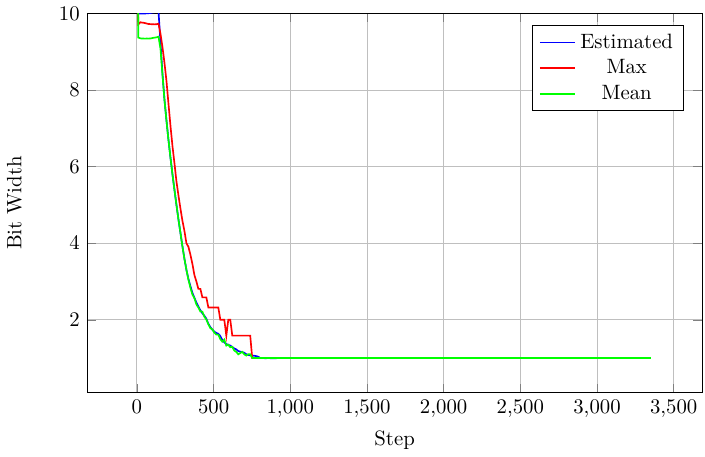}
    \caption{ResNet20 CIFAR10 W1A1 activations convergence}
    \label{fig:activations_converge}
    \Description*{}
\end{figure}

\begin{figure}[tbp]
    \centering
    \includegraphics[width=0.9\linewidth]{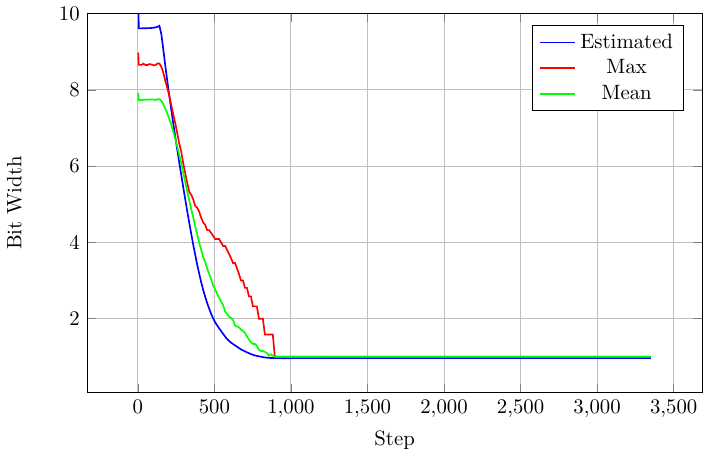}
    \caption{ResNet20 CIFAR10 W1A1 weights convergence}
    \label{fig:weights_converge}
    \Description*{}
\end{figure}

\subsection{The impact of the LR annealing}
As seen from the figure~\ref{fig:train_loss}, when the temperature grows, it causes a steady increase in the training loss due to the increased weight of potential \(P\) in~(\ref{eq:loss}). Eager converging of the bit-width leads to the overshoot in the distillation loss (figure~\ref{fig:dist_loss}). So the solution lands on the constraint surface at the suboptimal point, and the additional effort is needed to converge to the optimal point moving along the constraint. Table~\ref{tbl:lr_anneal} shows the difference between the accuracy when the bit-width target is first reached and the best value. 

\begin{figure}[tbp]
    \centering
    \includegraphics[width=0.9\linewidth]{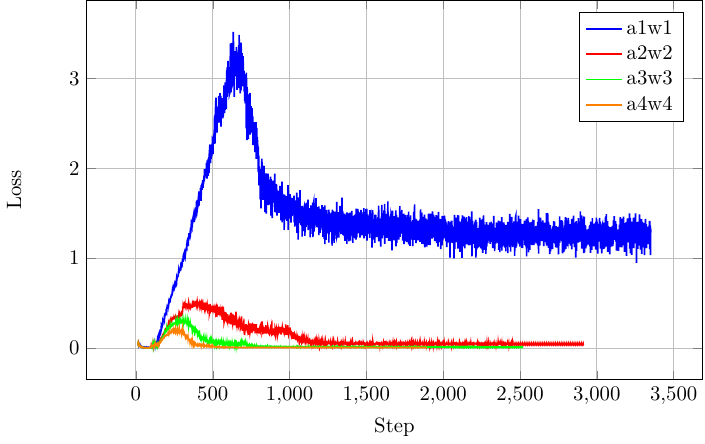}
    \caption{ResNet20 CIFAR10 training loss~(eq. \ref{eq:loss})}
    \label{fig:train_loss}
    \Description*{}
\end{figure}

\begin{figure}[tbp]
    \centering
    \includegraphics[width=0.9\linewidth]{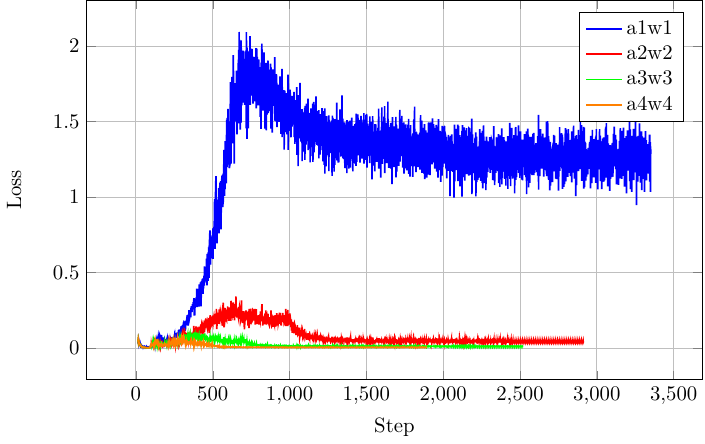}
    \caption{ResNet20 CIFAR10 distillation loss \(d\)}
    \label{fig:dist_loss}
    \Description*{}
\end{figure}

\begin{table}[tbp]
\centering
\begin{tabular}{c|c|c}
\hline
\textbf{Bit-width} & \textbf{Top-1 at bit-width} & \textbf{Best Top-1} \\
\hline
W1A1 & 68.99 & 85.16 \\
W2A2 & 85.61 & 91.24 \\
W3A3 & 89.97 & 92.52 \\
W4A4 & 91.25 & 92.53 \\
\hline
\end{tabular}
\caption{ResNet20 CIFAR10 Top-1 when the bit-with first reached vs. best}
\label{tbl:lr_anneal}
\end{table}

\subsection{The impact of PTQ}
The main reason we have included PTQ is the beautification of the figures~\ref{fig:activations_converge},~\ref{fig:weights_converge}. As seen from the table~\ref{tbl:ablation} it has minor impact on the accuracy and the convergence speed of the method.

\subsection{The impact of gradual bit-width scaling}

We disable the gradual bit-width scaling by setting the high initial value of \(t_q\) in~(\ref{eq:loss}). As a result, the model almost immediately converges to the desired bit-width and then moves along the surface of the constraint. With this modification, ResNet20 W1A1 training converges to the local optimum with very low accuracy.

\subsection{The impact of distillation}
To disable the distillation, the distillation loss is replaced by Cross-Entropy loss between predicted and ground-truth hard labels used in FP model training. Without distillation, ResNet20 training does not converge to a meaningful solution.

\subsection{The impact of distillation loss symmetry}
The choice of the symmetrical distillation loss was dictated by aesthetics and FEC decoding considerations. So we expected that asymmetrical Cross-Entropy loss typically used in distillation would provide the same results. In one experiment on W1A1 quantization, the results with asymmetrical loss are slightly worse than with symmetrical.

\subsection{The impact of FP checkpoint quality}
Due to the distillation, the quality of the quantized model is inherited from the FP checkpoint. So we checked how the checkpoint accuracy impacts the quantization results at low bit-width. The less accurate FP checkpoint produces the less accurate quantized model, though the difference is less than that between FP accuracies.

\subsection{Batchnorm freezing}

The paper~\cite{li2019fully} proposes freezing the Batchnorm running statistics for improving low-bit quantization. Many authors follow this recommendation. We have checked it with our method and see no improvement in quantized model accuracy.

\section{Architecture exploration}
As the proposed method directly varies the bit-widths, it is possible to observe the evolution of the per layer quantitation precision
during the training, as seen in the figure~\ref{fig:evolution}. This may give a hint on the difference in layers' sensitivity to quantization and bottlenecks.

\begin{figure*}[htbp]
    \centering
    \begin{tabular}{ccccc}
        \includegraphics[width=0.27\textwidth, valign=c]{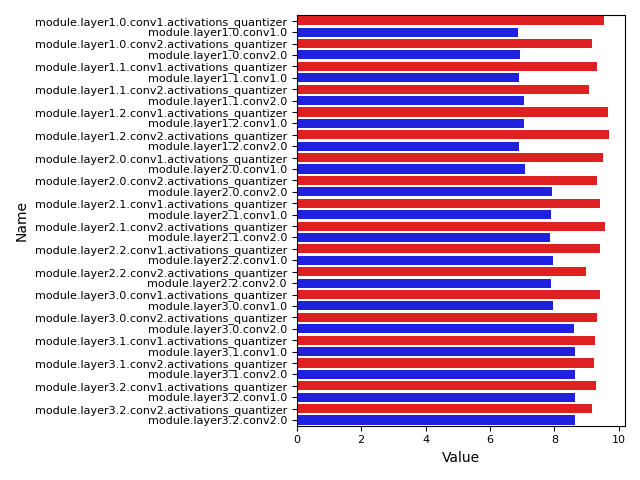}&
        {$\Rightarrow$}&
        \includegraphics[width=0.27\textwidth,valign=c]{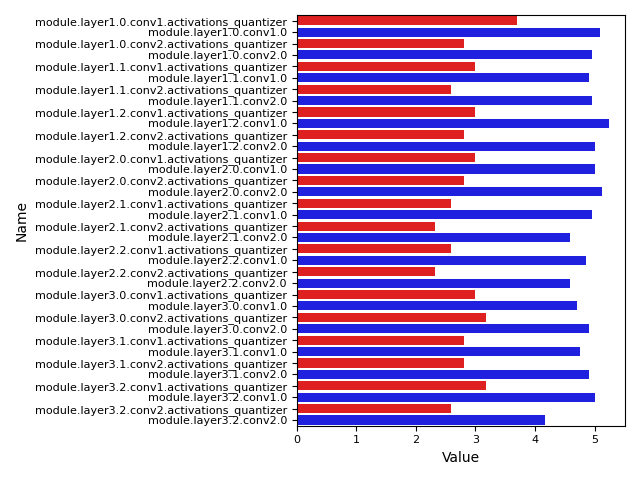}& 
        {$\Rightarrow$}&
        \includegraphics[width=0.27\textwidth,valign=c]{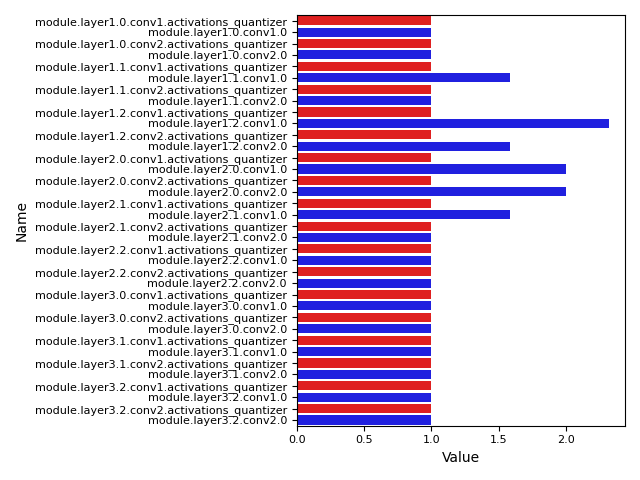}
    \end{tabular}
    \caption{Resnet20 CIFAR10 W1A1 bit-width evolution}
    \label{fig:evolution}
    \Description*{}
\end{figure*}

Another practical problem in quantization is how many bits are enough for the model to retain most of all of the FP accuracy. This problem can be solved by trying all the configurations on the grid of feasible weight/accuracy bit-width combinations, which may be expensive. The benefit of the proposed algorithm is that the loss function~(\ref{eq:loss}) is symmetrical regarding \(P\) and \(d\). So by interchanging the temperature \(t_q\) with \(t_r\) the bit-width constraint becomes an optimization target while the distillation loss becomes the constraint, 
\begin{equation}
\bigl(t_q(n), t_r(n)\bigr) = \bigl(1, \lambda_n n\bigr).
\end{equation}
So instead of quantizing to the target bit-width, the algorithm solves the problem of mixed-precision lossless quantization.

After convergence the model has the bit-widths shown on the figure~\ref{fig:bw_explore} while all the accuracy of the FP model is retained. From this figure, we can conclude that the model can be quantized to W6A7 for free except for the first quantized layer activations, which require 8 bits. This information can be used for making decisions on the model partitioning or automatic architecture adaptation similarly to NCE~\cite{park2023automaticnetworkadaptationultralow}.

\begin{figure}[tbp]
    \centering
    \includegraphics[width=0.8\linewidth]{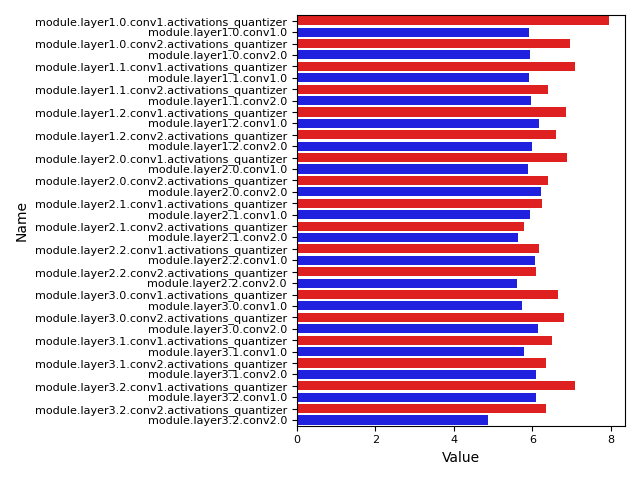}
    \caption{Resnet20 CIFAR10 lossless quantitation bit-widths}
    \label{fig:bw_explore}
    \Description*{}
\end{figure}

\section{Conclusion}
Despite being simple to understand and implement—and relying on uniform quantization—the proposed method achieves competitive results even against more complex approaches, including non-uniform, power-of-two, and search-based quantization methods. This suggests that the potential of STE-based techniques is not yet exhausted; incorporating non-uniform schemes could yield further gains. From a practical standpoint, uniform quantization aligns well with many dedicated hardware neural accelerators, especially on resource-constrained devices, broadening the method’s engineering applicability. Finally, the parallels between FEC and quantization suggest the possibility of lossless quantization relative to the FP model even at very low bit widths, analogous to lossless FEC decoding.

\bibliographystyle{acm}
\bibliography{main}

\appendix
\section{The theoretical ideas behind the algorithm}
\subsection{Calculating STE derivatives of the rounding noise}
Quantization is defined as the composition of dequantization \(D\) and quantization \(Q\).
Let
\begin{equation}
\bar{x} \;:=\; \operatorname{clamp}(x;l,u)\;=\;\max\,\bigl(l,\,\min(u,x)\bigr),
\end{equation}
\begin{equation}
D(\xi)\;:=\; s\,\xi + z,
\qquad
D^{-1}(y)\;=\; s^{-1}(y-z),
\end{equation}
\begin{equation}
q(x)\;:=\;D^{-1}(\bar{x})\;=\;s^{-1}\bar{x}-s^{-1}z,
\end{equation}
\begin{equation}
Q(x)\;:=\;q(x) + r\bigl(q(x)\bigr).
\end{equation}
The quantization noise is
\begin{equation}
r(x)\;:=\;\lfloor x \rceil - x \;=\; \lfloor x + \tfrac12\rfloor - x,
\end{equation}
where \(\lfloor \cdot \rceil\) denotes rounding to the nearest integer.
For standard integer bit-width quantization, \(q(x)\in\mathbb{Z}\) for \(x\in[l,u)\) (in particular, \(q([l,u)) \subset \mathbb{Z}\)).

By composition, we obtain an explicit form of the (de)quantized output:
\begin{equation}
D\bigl(Q(x)\bigr)
= s\bigl(q(x)+r(q(x))\bigr) + z
= \bar{x} + s\,r\bigl(q(x)\bigr).
\end{equation}
The function \(\bar{x}\) is differentiable almost everywhere (a.e.); thus, the remaining step in differentiating the quantizer is to specify the derivative of \(s\,r(q(x))\).
For a sufficiently “busy” continuous signal \(x\), the quantization error \(r(x)\) is approximately uniform on \([-\tfrac12,\tfrac12)\)~\cite{lipshitz1992quantization}, albeit generally correlated with \(x\).

By the chain rule,
\begin{equation}
\frac{\partial}{\partial x}\Bigl(s\,r\bigl(q(x)\bigr)\Bigr)
\;=\;
s\,\frac{\partial r}{\partial x}\!\bigl(q(x)\bigr)\;\frac{\partial q}{\partial x}(x).
\end{equation}
Using a centered finite-difference surrogate with a small step \(\Delta>0\),
\begin{equation}
\begin{split}
\frac{\partial r}{\partial x}(x)
\;\approx\;
\frac{1}{2\Delta}\!\Bigl(
\bigl\lfloor x+\Delta \bigr\rceil - (x+\Delta)
\;-\;
\bigl\lfloor x-\Delta \bigr\rceil + (x-\Delta)
\Bigr) \\
=
\frac{1}{2\Delta}\!\Bigl(\mathbb{\lfloor}x+\Delta\mathbb{\rceil}-\mathbb{\lfloor}x-\Delta\mathbb{\rceil}-2\Delta\Bigr).
\end{split}
\end{equation}

For integer bit-width quantization we may model \(x\) (i.e., \(q(x)\)) as uniformly distributed on an integer-length interval \([q(l),\,q(u))\).
Under mini-batch SGD we use the sample mean over the mini-batch
\begin{equation}
\mathbb{E}\,\frac{\partial r}{\partial x}(x)
\;\approx\;
\frac{1}{2\Delta}\Bigl(\mathbb{E}\,\lfloor x+\Delta \rceil
-\mathbb{E}\,\lfloor x-\Delta \rceil - 2\Delta\Bigr).
\end{equation}

\begin{lemma}
\label{lem:fd-round}
Let $l,u\in\mathbb{Z}$ with $l<u$, and let $x\sim\mathrm{Unif}[l,u)$. For any $\Delta\in(0,\tfrac12)$,
\begin{equation}
V = \mathbb{E}\,\lfloor x+\Delta\rceil-\mathbb{E}\,\lfloor x-\Delta\rceil-2\Delta = 0.
\end{equation}
\end{lemma}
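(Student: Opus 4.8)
The plan is to exploit the integrality of $u-l$ together with the $1$-periodic structure of rounding. First I would write $\lfloor\cdot\rceil=\lfloor\cdot+\tfrac12\rfloor$ and introduce the single-variable gap function $g(x):=\lfloor x+\Delta\rceil-\lfloor x-\Delta\rceil$, so that $V=\mathbb{E}\,g(x)-2\Delta$ and it suffices to prove $\mathbb{E}\,g(x)=2\Delta$.

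Next I would note that because $0<2\Delta<1$, the quantity $g(x)$ is always either $0$ or $1$: it equals $1$ exactly when the half-open window $(x-\Delta,\,x+\Delta]$ straddles a rounding break-point, i.e.\ a half-integer $k+\tfrac12$ with $k\in\mathbb{Z}$; equivalently $g(x)=1\iff x\in\bigl(k+\tfrac12-\Delta,\;k+\tfrac12+\Delta\bigr]$ for some integer $k$, an interval of length exactly $2\Delta$. Since $\Delta<\tfrac12$, each such interval sits inside $(k,k+1)$, so the ones meeting $[l,u)$ are precisely those with $k\in\{l,l+1,\dots,u-1\}$ (the cases $k=l-1$ and $k=u$ fall entirely outside $[l,u)$), and these $u-l$ intervals are pairwise disjoint. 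Hence $\int_l^u g(x)\,dx=(u-l)\cdot 2\Delta$, and dividing by the length $u-l$ of the support gives $\mathbb{E}\,g(x)=2\Delta$, so $V=0$.

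An equivalent route, if one prefers to avoid the geometric counting, is the elementary identity that for any $y$ uniform on an interval of integer length one has $\mathbb{E}\lfloor y\rfloor=\mathbb{E}\,y-\tfrac12$; applying this to $y=x+\Delta+\tfrac12$ and to $y=x-\Delta+\tfrac12$ (both uniform on intervals of length $u-l$, with $\Delta<\tfrac12$ keeping the shifts controlled) yields $\mathbb{E}\lfloor x+\Delta\rceil=\tfrac{l+u}{2}+\Delta$ and $\mathbb{E}\lfloor x-\Delta\rceil=\tfrac{l+u}{2}-\Delta$, whose difference is $2\Delta$. The only genuinely delicate point in either approach is the boundary bookkeeping near $l$ and $u$: the argument uses $\Delta<\tfrac12$ strictly, and the integrality of $l$ and $u$, precisely to ensure that the half-integer break-points are neither double-counted nor lost at the endpoints; everything else is routine.
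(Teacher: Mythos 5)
Your proposal is correct, and your lead argument takes a genuinely different route from the paper's. The paper decomposes $x=K+\xi$ with $K$ uniform on $\{l,\dots,u-1\}$ and $\xi\sim\mathrm{Unif}[0,1)$ independent, computes $\mathbb{E}\lfloor\xi\pm\Delta\rceil=\tfrac12\pm\Delta$ directly from the threshold form of rounding on $[0,1)$, and adds back $\mathbb{E}K=\tfrac{l+u-1}{2}$. You instead work with the difference $g(x)=\lfloor x+\Delta\rceil-\lfloor x-\Delta\rceil\in\{0,1\}$ and measure the set $\{g=1\}$: a union of $u-l$ pairwise disjoint windows of length $2\Delta$ centered at the half-integers $k+\tfrac12$, $k=l,\dots,u-1$, each contained in $(k,k+1)$ because $\Delta<\tfrac12$, so $\int_l^u g=(u-l)\,2\Delta$ and $\mathbb{E}g=2\Delta$. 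This geometric counting buys an explicit picture of exactly where $\Delta<\tfrac12$ and the integrality of $l,u$ enter (disjointness and containment in unit cells, hence no boundary effects); the only blemish is your endpoint convention $(k+\tfrac12-\Delta,\,k+\tfrac12+\Delta]$, which under $\lfloor\cdot\rceil=\lfloor\cdot+\tfrac12\rfloor$ should be half-open on the other side, a measure-zero discrepancy that does not affect the expectation for a continuous uniform $x$. Your second route, via $\mathbb{E}\lfloor y\rfloor=\mathbb{E}y-\tfrac12$ for $y$ uniform on an interval of integer length applied to $y=x\pm\Delta+\tfrac12$, is essentially the paper's fractional-part argument repackaged as a general identity, and it actually proves slightly more: no restriction on $\Delta$ is needed there, so $V=0$ holds for every real $\Delta$, the hypothesis $\Delta\in(0,\tfrac12)$ being required only by your counting argument (and by the surrounding finite-difference STE derivation), not by the lemma itself.
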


\begin{proof}
Write $x=K+\xi$, where $K\in\{l,l+1,\dots,u-1\}$ and $\xi\in[0,1)$ denote the integer and the fractional parts of $x$. Since $x$ is uniform on $[l,u)$, we have
\begin{equation}
\mathbb{P}(K=k)=\frac{1}{u-l}\quad\text{for }k=l,\dots,u-1,\quad \xi\sim\mathrm{Unif}[0,1),
\end{equation}
and $K$ and $\xi$ are independent. 

Because $\Delta\in(0,\tfrac12)$ and $\xi\in[0,1)$, we have
\begin{equation}
\lfloor \xi+\Delta\rceil=\begin{cases}
0,& \xi<\tfrac12-\Delta\\
1,& \xi\ge \tfrac12-\Delta
\end{cases},
\quad
\lfloor \xi-\Delta\rceil=\begin{cases}
0,& \xi<\tfrac12+\Delta\\
1,& \xi\ge \tfrac12+\Delta
\end{cases}.
\end{equation}
Hence, by $\xi\sim\mathrm{Unif}[0,1)$,
\begin{equation}
\mathbb{E}\,\lfloor \xi+\Delta\rceil=\mathbb{P}\bigl(\xi\ge\tfrac12-\Delta\bigr)=1-\Bigl(\tfrac12-\Delta\Bigr)=\tfrac12+\Delta,
\end{equation}
\begin{equation}
\mathbb{E}\,\lfloor \xi-\Delta\rceil=\mathbb{P}\bigl(\xi\ge\tfrac12+\Delta\bigr)=1-\Bigl(\tfrac12+\Delta\Bigr)=\tfrac12-\Delta.
\end{equation}
Using $\lfloor K+\xi\pm\Delta\rceil=K+\lfloor \xi\pm\Delta\rceil$ and independence,
\begin{equation}
\mathbb{E}\,\lfloor x\pm\Delta\rceil
=\mathbb{E}\,K+\mathbb{E}\,\lfloor \xi\pm\Delta\rceil
=\frac{l+u}{2}\pm\Delta.
\end{equation}
Therefore,
\begin{equation}
V =\Bigl(\frac{l+u}{2}+\Delta\Bigr)-\Bigl(\frac{l+u}{2}-\Delta\Bigr)-2\Delta = 0.
\end{equation}
\end{proof}

By Lemma~\ref{lem:fd-round}
\begin{equation}
\mathbb{E}\,\frac{\partial r}{\partial x}(x)\ \approx\ 0.
\end{equation}
To extend this STE behavior to arbitrary \(l,u\in\mathbb{R}\) with \(l\le u\), we enforce the constraint \(\mathbb{E}\,\partial r/\partial x \approx 0\) during backpropagation.

For gradients w.r.t.\ the scale \(s\) we use the common stop-gradient assumption that \(r\) does not backpropagate through \(s\) (i.e., treat \(r\) as independent of \(s\) in the backward pass),
\begin{equation}
\frac{\partial}{\partial s}\bigl(s\,r\bigr)\;\approx\; r.
\end{equation}

\paragraph{mini-batch statistics.}
Let \(r_1,\dots,r_m\) be i.i.d.\ samples of the quantization noise with mean \(0\) and variance \(\sigma^2\).
By the Central Limit Theorem, the mini-batch mean \(\bar r_m\) satisfies
\begin{equation}
\bar r_m = \frac{1}{m}\sum_{i=1}^m r_i \approx \mathcal{N}\bigl(0,\,\sigma^2/m\bigr),
\end{equation}
so replacing the exact noise distribution by any zero-mean alternative with the same variance preserves the leading-order mini-batch behavior.

A convenient choice is a symmetric Bernoulli (Rademacher) proxy that matches the variance of the uniform rounding noise on \([-\tfrac12,\tfrac12)\), whose standard deviation is \(1/(2\sqrt{3})\).
One option is
\begin{equation}
r \;\sim\; \tfrac{1}{\sqrt{3}}\Bigl(\mathrm{Bernoulli}\bigl(\tfrac12\bigr)-\tfrac12\Bigr).
\end{equation}
In practice, one can either use the true (clipped) rounding residuals as in LSQ~\cite{esser2019learned}, or adopt such a zero-mean proxy (DoReFa~\cite{zhou2016dorefa}) with or even without variance matching.

\subsection{Relation between Symmetric KL Divergence and Hamming Distance}

Denote the Jeffreys (symmetric Kullback--Leibler) divergence between \(Q\) and \(P\) by
\begin{equation}
J(Q, P) = D_{\mathrm{KL}}(Q\|P) + D_{\mathrm{KL}}(P\|Q).
\end{equation}
Let an asymmetric binary memoryless channel (BAC) be characterized by error probabilities \( p_0 \) for the transition \( 0 \to 1 \) and \( p_1 \) for the transition \( 1 \to 0 \). For each bit, the true binary value \( b \in \{0,1\} \) specifies a “soft label” \( Q(b) \) as follows:
\begin{equation}
Q(b) =
\begin{cases}
(1-p_0,\, p_0), & \text{if } b=0,\\
(p_1,\, 1-p_1), & \text{if } b=1.
\end{cases}
\end{equation}
Let \(\tilde b\in\{0,1\}\) denote the observed (channel output) bit. Using a Jeffreys-based per-bit decision rule, define
\begin{equation}
\hat b_{j,\mathrm{J}}
:= \arg\min_{b\in\{0,1\}} J\big(Q(b),\,Q(\tilde b_j)\big),
\end{equation}
and note that the observation \(\tilde b\) induces "soft labels" \(Q(\tilde b)\).

Then, for each individual bit, the following holds:
\begin{enumerate}
    \item If no error occurred (\(\tilde b=b\)), then \( Q(\tilde b)=Q(b) \) and 
    \begin{equation}
    J(Q(b), Q(\tilde b))=0.
    \end{equation}
    \item If an error occurred (\(\tilde b\neq b\)), then
    \begin{equation}
    J(Q(b), Q(\tilde b)) = \Delta,
    \end{equation}
    where
    \begin{equation}
    \begin{split}
    \Delta = D_{\mathrm{KL}}\Bigl((1-p_0, p_0) \,\Big\|\, (p_1, 1-p_1)\Bigr) \\
          + D_{\mathrm{KL}}\Bigl((p_1, 1-p_1) \,\Big\|\, (1-p_0, p_0)\Bigr),
    \end{split}
    \end{equation}
    for the case \( b=0 \) (the case \( b=1 \) yields the same constant value).
\end{enumerate}
Thus, for a binary vector of length \( n \), the total symmetric KL divergence between the true soft labels (determined by the true bits) and the observed soft labels equals
\begin{equation}
J^{\mathrm{total}} = \Delta \cdot d_H(b,\tilde b),
\end{equation}
where \( d_H(b,\tilde b) \) is the Hamming distance between the true vector \( b \) and the observed vector \( \tilde b \), since only error positions contribute to the divergence. Hence, for an asymmetric binary memoryless channel, the symmetric KL divergence between the true soft labels and the observed soft labels is proportional to the Hamming distance between observed and true binary features.

\paragraph{Note on BSC vs.\ BAC}
If \(p_0=p_1\), the BAC reduces to a binary symmetric channel (BSC). In this case,
\begin{equation}
\begin{split}
J(Q(b), Q(\tilde b))=2D_{\mathrm{KL}}(Q(b) \| Q(\tilde b)) \\
= 2H(Q(b), Q(\tilde b))-2H(Q(b)), \quad H(Q(b)) = const,   
\end{split}
\end{equation}
where \(H(P,Q)\) denotes cross-entropy. 
Which hypothesis (BAC or BSC) is more accurate can be checked empirically by comparing which distance \(J(P,Q)\) or \(H(P,Q)\) works better.

\subsection{Quantization as the constrained optimization problem}
With the loss differentiable by bit-width, we can now formulate a smooth constrained optimization problem with inequality constraints
\begin{equation}
\begin{cases}
\bar{\Omega} = \arg\min \sum_{k=1}^{M} L(x_k,y_k^*;\Omega) \\
\omega_w(\Omega) \leq \omega_w^* \\
\omega_a(\Omega) \leq \omega_a^*, \\
\end{cases}
\end{equation}
here \(\Omega\) is a tensor of quantized NN parameters, \(M\) is a number of training samples, \(x_k\) is a training sample, \(y_k^*\) is a soft label, \(\omega_a(\Omega), \omega_w(\Omega)\) are bit-width tensors of activations and weights, \(\omega_a^*, \omega_w^*\) are target bit-widths.
\subsection{Potential method of numerically solving constrained problem as unconstrained}
As we want to gradually change the bit-width, we can either tighten the constraints over time or use the (simpler) exterior-point optimization method. To do so, we convert the constraint into a potential function that grows outside the feasible region
\begin{equation}
\bar{L}(\Omega, t)=t P(\omega_w(\Omega),\omega_a(\Omega), \omega_w^*, \omega_a^*)+ L(\Omega)    
\end{equation}
\begin{equation}
P = \mathbb{E}(\max(0, \omega_w - \omega_w^*) + \max(0, \omega_a - \omega_a^*))
\end{equation}
\begin{equation}
\bar{\Omega} = \lim_{t \rightarrow \infty} \arg\min \bar{L}(\Omega, t)
\end{equation}
This problem can be solved iteratively using SGD with a scaling parameter \(t \rightarrow \infty\) over training time.

\end{document}